\title{
	\vspace*{-0.5in}
	{\small \hfill \textit{An original version of this paper appeared in UAI-17}}\\
	\vspace*{.25in}
	Exact Inference for Relational Graphical Models with Interpreted Functions:
	Lifted Probabilistic Inference Modulo Theories
	}
\author{
Rodrigo de Salvo Braz \qquad\qquad Ciaran O'Reilly\\
Artificial Intelligence Center \\
SRI International \\
Menlo Park, CA USA \\
}
\newtheorem{definition}{Definition}[section]
\newtheorem{theorem}{Theorem}[section]
\newtheorem{example}[theorem]{Example}
\newcommand{\ifte} [3]{\ensuremath{\mathtt{if}\,#1\,\mathtt{then}\,#2\,\mathtt{else}\,#3}\xspace}
\newcommand{\iftt}  {\ensuremath{\mathtt{if }}\;}
\newcommand{\thentt}{\;\ensuremath{\mathtt{ then }}\;}
\newcommand{\elsett}{\;\ensuremath{\mathtt{ else }}\;}
\newcommand{\true}{\ensuremath{\textsc{true}}\xspace}
\newcommand{\false}{\ensuremath{\textsc{false}}\xspace}
\newcommand{\bx}{\ensuremath{\mathbf{x}}\xspace}
\newcommand{\bq}{\ensuremath{\mathbf{q}}\xspace}
\newcommand{\br}{\ensuremath{\mathbf{r}}\xspace}
\newcommand{\be}{\ensuremath{\mathbf{e}}\xspace}
\newcommand{\bu}{\ensuremath{\mathbf{u}}\xspace}
\newcommand{\bff}{\ensuremath{\mathbf{f}}\xspace}
\newcommand{\nesting}{\ensuremath{N_{\cap \cup}}\xspace}
\newcommand{\comment}[1]{}
\newcommand{\define}[1]{\textbf{#1}}
\newcommand\numberthis{\addtocounter{equation}{1}\tag{\theequation}}
\begin{document}

\maketitle

\begin{abstract}
	Probabilistic Inference Modulo Theories (PIMT) is a recent framework that expands
	exact inference on graphical models to use
	richer languages that include arithmetic, equalities, and inequalities
	on both integers and real numbers.
	In this paper, we expand PIMT to a \emph{lifted} version that
	also processes random functions and relations.
	This enhancement is achieved by adapting \emph{Inversion}, a method from Lifted First-Order Probabilistic Inference literature, to also be modulo theories.
	This results in the first algorithm for exact probabilistic inference
	that efficiently and simultaneously exploits
	random relations and functions, arithmetic, equalities and inequalities.
\end{abstract}

\section{INTRODUCTION}

Graphical models such as Bayesian networks and Markov networks
\citep{pearl88probabilistic}
are a well-principled way of representing probabilistic models.
They represent the dependences between random variables in a factorized form
that can be exploited
by probabilistic inference algorithms (such as Variable Elimination \citep{zhang94simple}
and Belief Propagation \citep{pearl88probabilistic}) for greater efficiency.

However, traditional graphical models representations
are unable to represent other types of useful structures:
\begin{itemize}[leftmargin=0.15in]
	\item \define{Relational structure} occurs
	when families of random variables share the same dependences.
	Suppose a model involves random variables
	$sunny$, $happy(Ann)$, $happy(Bob),\dots$
	and the analogous dependences
	$P(happy(Ann) | sunny)$, $P(happy(Bob)|sunny)$ and so on.
	Intuitively, this structure can be exploited for greater efficiency
	since the same
	inference can often be performed only once for an entire
	family of random variables.
	Traditional graphical model representations cannot explicitly indicate
	this structure and thus algorithms cannot directly exploit it.
	\item \define{Algebraic structure}	occurs when dependences between
	random variables (such as conditional probabilities)
	can be compactly described by an algebraic expression.
	For example, it may be that $P(x \in \{1,\dots,1000\} | y \in \{1,\dots,1000\})$
	is equal to $\ifte{x = y}{0.8}{0.2/999}$.
	Intuitively, this can be exploited for greater efficiency
	because large groups of values can be treated in the same way
	(in this case, all pairs of values $(x,y)$ for which $x \neq y$
	are equally probable).
	Traditional graphical model representations, however,
	represent such functions as a lookup table (a large $1000\times1000$ one for this example)\footnote{But note exceptions in  \cite{csi,sanner2012symbolic}.},
	thus not explicitly representing the algebraic structure,
	which prevents algorithms from directly exploiting it.
\end{itemize}

These types of structure are commonly found in real-world applications
and fields such as probabilistic programming,
so research has been conducted on exploiting both:
\begin{itemize}[leftmargin=0.15in]
	\item \define{Lifted probabilistic inference}
	\citep{poole03first,desalvobraz07thesis,milch08lifted,vanDenBroeck11lifted,kersting12lifted}
	exploits \define{relational structure}
	that is explicitly represented with richer languages
	generally known as \define{Probabilistic Relational Models}.
	These representations resemble universal quantification in first-order logic
	such as $\forall x \in People \, P(happy(x) | sunny)$ for $People$ a discrete set $\{Ann, Bob, \dots\}$.
	In logic terminology, this employs \define{uninterpreted functions},
	since the random relations are Boolean functions
	without a fixed interpretation
	(for example, the value of $happy(Ann)$ is not fixed in advance).
	\item \define{Probabilistic Inference Modulo Theories (PIMT)}
	\citep{desalvobraz16probabilistic}
	exploits \define{algebraic structure}
	by explicitly representing and manipulating function definitions of random variable
	dependences in the form of algebraic expressions.
	In logic terminology, this employs \define{interpreted functions},
	because functions like equality ($=$) and arithmetic functions ($+$, $\times$ etc),
	among others, have \emph{fixed} interpretations: $3 = 3$ is always $\true$,
	and $1 + 3$ is always $4$.
\end{itemize}
In this paper, we present the \define{first lifted probabilistic algorithm on languages with interpreted functions}.
This is done by \define{unifying these two lines of research} by incorporating
uninterpreted functions and an important lifted probabilistic inference operation,
\define{Inversion}, into PIMT.
(Another major lifted operation, Counting \citep{desalvobraz07thesis, milch08lifted}, is left for future work.)
We call this fusion of lifted inference and PIMT \define{Lifted Probabilistic Inference Modulo Theories (LPIMT)}. 

Moreover, we achieve this unification \emph{modularly}, by using PIMT's ability
to be extended by solvers for specific theories,
and by encapsulating lifted inference into it as an \emph{uninterpreted functions theory solver}.
This means the algorithm can apply lifted
inference even to unanticipated theories,
because solvers for different theories are orthogonally applied by the general,
theory-agnostic level of PIMT.

Casting lifted inference in the PIMT framework also makes it simpler and more powerful than previous lifted inference methods.
Besides the above mentioned advantages, it also uses standard mathematical notation (e.g.,
not making a hard distinction between ``logical'' and ``random'' variables);
does not separate constraints from potential function definitions;
and accepts random function applications even as arguments to other functions (nested applications).

\section{BACKGROUND}

\subsection{Graphical Models and Variable Elimination}

\define{Graphical models} are a standard framework for reasoning with uncertainty.
The most common types are Bayesian networks
and Markov networks.
In both cases, a \define{joint probability distribution}
for each assignment tuple \bx to $N$ random variables
is defined as a normalized product of non-negative real functions $\{\phi_i\}_{i\in 1..M}$,
where $1..M$ is short for $\{1,\dots,M\}$,
each of them applied to a subtuple $\bx_i$ of \bx:\footnote{For simplicity, we use the same symbols for both random variables and their values, but the meaning should be clear.}
\begin{align*}
P(\bx) = \frac{1}{Z}\prod_i \phi_i(\bx_i),
\end{align*}
where $Z$ is a normalization constant equal to $\sum_{\bx} \prod_i \phi_i(\bx_i)$.
Functions $\phi_i$ are called \define{factors}
and map each assignment on their arguments to a \define{potential},
a non-negative real number that represents
how likely the assignment $\bx_i$ is.
This representation is called \define{factorized} due to its
breaking the joint probability into this product.
In Bayesian networks, $M = N$ and factors are conditional probabilities
$P(x_i | Pa_i)$, for each random variable $x_i$ in \bx, where $Pa_i$
are its parents in a directed acyclic graph.

The \define{marginal probability (MAR)} problem consists of computing $P(\bq) = \sum_{\br} P(\bx)$, where $\bq$ is a subtuple of \bx
containing queried variables, and $\br$ are all the remaining variables in $\bx$.
It can be shown that $P(\bq)=\frac{1}{Z_{\bq}} \sum_\br \prod_i \phi_i(\bx_i)$
for $Z_\bq$ a normalization constant over $\bq$.
Therefore, the problem can be easily reduced to computing a \define{summation over products of factors}, which the rest of the paper focuses on.

The \define{belief update (BEL)} problem consists of computing $P(\bq|\be)$,
the posterior probability on variables $\bq$ given assignment $\be$ to evidence variables (an observation).
BEL can be easily reduced to two instances of MAR, since Bayes' theorem establishes that $P(\bq|\be) = P(\bq, \be)/P(\be)$, where $P(\bq, \be)$ is the joint value of $\bq$ and $\be$.



Computing $\sum_{\br} \prod_i \phi_i(\bx_i)$ is crucial
for inference with graphical models, but solving it naively has
exponential cost in the size of $\br$.
\define{Variable Elimination (VE)} is an algorithm that takes advantage
of the factorized representation to more efficiently compute this sum. 
For example, consider the following summation given
a factorization on the variables ($h$)appy, ($w$)eekday, ($t$)emperature, ($m$)onth with range sizes
of $2$ (true or false), 7 (weekdays), $3$ ($hot$, $mild$, $cold$), and $12$ (months) respectively:
\begin{align*}
\sum_{h, w, t, m} P(h | w, t) P(w) P(t|m) P(m).
\end{align*}
A naive computation iterates over $504$ assignments to $(h, w, t, m)$.
However, the joint probability factorization
enables us to manipulate the expression
and compute it with iterations over fewer assignments:
\begin{align*}
& \sum_{h, w, t, m} P(h | w, t) P(w) P(t|m) P(m)
\\& = \sum_{h, w, t} P(h | w, t) P(w) \sum_m P(t|m) P(m).
\end{align*}
Now, we \define{sum} $m$ \define{out} (or \define{eliminate} it), 
obtaining a new factor $\phi$, defined on $t$ alone,
that replaces the last summation.
This requires going over each value $t$ and computing $\phi(t) = \sum_m P(t|m)P(m)$
(thus iterating over the values of $m$).
Therefore, computing $\phi$ takes iterating over $36$ assignments to $t,m$ and we
are left with the following new summation of a product of functions:
\begin{align*}
& \sum_{h, w, t} P(h | w, t) P(w) \phi(t)
\\& = \sum_w P(w) \sum_{h, t} P(h | w, t) \phi(t)
\\& = \sum_w P(w) \phi'(w) \text{ (after $42$ assignments to $w, h, t$)}
\\& = \phi'' \text{ (after $7$ assignments to $w$)}.
\end{align*}
Variable Elimination therefore decreases the number of required iterations
to $85$, a six-fold reduction, by exploiting the fact that
not all variables share factors with all variables
(for example, $m$ only shares a factor with $t$).

VE applies not only to summations of products, but to any commutative associative semiring
\citep{bistarelli97semiring}:
maximization of products, disjunctions of conjunctions, conjunctions of disjunctions, and so on.
We use $\oplus$ and $\otimes$ for the \define{additive and multiplicative operators} of the semiring.
We call \define{quantifiers} the intensional versions of operators:
$\forall$ for $\wedge$, $\exists$ for $\vee$, $\sum$ and $\int$ for $+$ (for discrete and continuous indices respectively), $\prod$ for $\times$,
Max for max, and so on.
Quantifiers corresponding to $\oplus$ and $\otimes$ are denoted  $\bigoplus$ and $\bigotimes$, respectively. We use $\bigodot$ for denoting any type of quantifier. VE works by selecting the next variable $v$ in \br to be eliminated
and computing a new factor
\[\phi(\bu) = \bigoplus_v \bigotimes_{i \in \bff(v)} \phi_i(\dots,v,\dots),\]
where $\bff(v)$ is the set of indices of factors of $v$ and $\bu$
are the variables sharing factors with $v$.
It then includes $\phi$ in the product of factors and proceeds until all variables in $\br$ are eliminated.

\subsection{Probabilistic Inference Modulo Theories }

Typically, graphical model inference algorithms assume that
factors can only be accessed as opaque lookup tables,
that is, by providing an assignment to its random variables.
This requires Variable Elimination to iterate over all assignments
to all random variables in a summation,
as seen in the example for computing $\phi(t)$ 
with Variable Elimination.

However, often the \emph{definitions}
of factors are available in the form of \define{symbolic algebraic expressions}
that use operators from specific theories.
For example, the conditional probability of temperature given month,
and the prior probability of month in our
original example could be represented by\footnote{Note that $P(t|m)$ sums up to $1$ for each $m$ because $hot$ and $mild$ both have the $\mathtt{else}$ case probability mass.}
\begin{align*}
& P(t|m) = \iftt m \leq 3
\\& \qquad \qquad \qquad \thentt \iftt t = cold \thentt 0.8 \elsett 0.1
\\& \qquad \qquad \qquad \elsett \iftt t = cold \thentt 0.4 \elsett 0.3
\\&P(m) = 1/12.
\end{align*}

While this form does not preclude regular VE to access it as a lookup table
(by computing its result given an assignment to $t$ and $m$),
the symbolic expression is itself available as a data structure.
This makes the structure of the factor evident;
for example, specific values of $m$
do not matter, but only whether $m \leq 3$.

\define{Probabilistic Inference Modulo Theories (PIMT)},
exploits this available symbolic information
for much faster inference.
It does so by using \define{Symbolic Evaluation Modulo Theories (SEMT)},
which is
a method for simplifying (and, when possible, completely evaluating)
symbolic algebraic expressions, including
eliminating quantifiers such as $\sum$, $\prod$, $\exists$, and $\forall$.
SEMT applies to general expressions, and not only
probabilistic reasoning-related ones.
PIMT is simply the use of SEMT applied to expressions
that happen to be (unnormalized) marginal probabilities,
so we mostly refer to SEMT from now on,
keeping in mind that it includes PIMT as a special case.

SEMT and PIMT are similar to
\define{Satisfiability Modulo Theories (SMT)} solvers \citep{barrett09satisfiability,moura07tutorial},
which also take modular theory-specific solvers
to solve multi-theory problems.
However, they generalize SMT in two important ways:
they are \define{quantifier-parametric}, that is,
they eliminate multiple types of quantifiers and deal with expressions of any type, including Boolean, numeric and categorical, 
as opposed to SMT only solving existential quantification
on Boolean-valued formulas;
and they are \define{symbolic}, that is, can process free variables
and return results expressed in terms of them,
as opposed to assuming all variables to be existentially quantified as in SMT.

SEMT is a variant of SGDPLL($T$) \citep{desalvobraz16probabilistic}
slightly generalized to deal with any expression, and not
just sequences of nested quantifiers, and to make use of \emph{contexts}, defined below.
We give an overview of SEMT here but refer to the SGDPLL($T$) reference for details.

SEMT receives a pair $(E, C)$ as input.
$E$ is the expression being evaluated, and $C$ the \define{context}.
Let \bx be the free variables in $E$.
Context $C$ is a formula on \bx
that indicates that \emph{only} the assignments on \bx 
satisfying $C$ need be considered.
For example, the expression
$\ifte{x \neq 1}{2}{4}$, under context $x = 2 \vee x = 3$, can be safely evaluated
to $4$ because $x \neq 1$ is never true under that context.

SEMT traverses the expression top-down and evaluates each sub-expression
according to a set of \define{rewriting rules} (presented below)
until no rule applies to any sub-expressions.
It also keeps track of the context holding for sub-expression,
depending on the expressions above it.
For example, in $\ifte{x = 1}{\sum_{i \in 1..10 : i \neq 3} \gamma}{\varphi}$, $\gamma$ is evaluated under context $x = 1 \wedge i \neq 3$.

Let $E[\alpha/\beta]$ be the \define{substitution} of all occurrences of expression $\alpha$ in expression $E$ by $\beta$.
The SEMT rewriting rules are the following (an example is provided afterwards):
\begin{itemize}[leftmargin=0.15in]
	\item \define{simplification}: simplifiers for each theory
	are applied when possible:
	examples are $1 + 2 \rightarrow 3$, $x = x \rightarrow true$, $\ifte{true}{1}{2} \rightarrow 1$, $0 \times x \rightarrow 0$ and so on.
	These always decrease the size of the expression.
	\item \define{literal determination}: if the expression is a literal $L$
	under a context $C$ and $\forall V : C \Rightarrow L$,
	where $V$ are the free variables in $C$ and $L$,
	rewrite the expression to \true;
	if $\forall V : C \Rightarrow \neg L$, rewrite it to \false;
	the tautology may be decided by SEMT itself or an SMT solver;
	\item \define{factoring out}: if $\phi_1$ does not involve index $i$,
	$\bigodot_{i \in D:C} (\phi_1 \odot \phi_2)
	\rightarrow \phi_1 \odot \bigodot_{i \in D:C} \phi_2$.
	\item \define{if-splitting}: if the expression is $\phi$ and contains
	a literal $L$ undefined by the context $C$,
	rewrite it to $\ifte{L}{\phi[L/\true]}{\phi[L/\false]}$.
	\item \define{quantifier-splitting}: rewrite expressions
	of the form $\bigodot_{i \in D:C} \phi$, where $i$ is a variable of type $D$, $F$ is a conjunction of literals in the theory for type $D$, and $\phi$ is expression containing a literal $L$ containing $i$,
	to a new expression containing two quantified expressions,
	each with one less literal in its body: $\Bigl(\bigodot_{i:F\wedge L} \phi[L/\true]\Bigr) \odot \Bigl(\bigodot_{i:F\wedge \neg L} \phi[L/\false]\Bigr)$.
	\item \define{theory-specific quantifier elimination}: if $\phi$ does not contain any literals, solve $\bigodot_{i \in D:F} \phi$ with a provided, modular \emph{theory-specific} solver for the type of $i$.
\end{itemize}
SEMT always returns a quantifier-free expression since the theory-specific quantifier elimination is eventually invoked for each quantifier. For an expression representing a marginal over a product of factors, it reproduces Variable Elimination by summing out one variable at a time,
with the advantage of exploiting factors represented
symbolically.\footnote{The variable order used by VE is encoded in the order of quantifiers,
and efficient ordering can be introduced by a rule that re-orders sequences of quantifiers.}

Consider the computation of $\phi(t) = \sum_m P(t|m)P(m)$ below.
Regular VE requires iterating over $36$ assignments to $t,m$,
but SEMT only needs to consider the $4$ truth assignments for literals $t = cold$ and $m \leq 3$:
\begin{align*}
& \sum_m (\iftt m \leq 3 \thentt \iftt t = cold \thentt 0.8 \elsett 0.1
\\& \qquad \qquad \qquad \elsett \iftt t = cold \thentt 0.4 \elsett 0.3)/12
\\& \rightarrow \text{(by if-splitting on $t = cold$ and simplification)}
\\& \;\quad \iftt t = cold
\\& \quad \qquad \thentt \sum_m (\iftt m \leq 3 \thentt 0.8 \elsett 0.4)/12
\\& \quad \qquad \elsett \sum_m (\iftt m \leq 3 \thentt 0.1 \elsett 0.3)/12
\\& \rightarrow \text{(by quantifier-splitting on $m \leq 3$ in first summation)}
\\& \;\quad \iftt t = cold
\\& \quad \qquad \thentt 
               \sum_{m:m \leq 3} \;\, (\iftt \true \thentt 0.8 \elsett 0.4)/12
\\& \quad \qquad \quad + \sum_{m:\neg(m \leq 3)} (\iftt \false \thentt 0.8 \elsett 0.4)/12
\\& \quad \qquad \elsett \sum_m (\iftt m \leq 3 \thentt 0.1 \elsett 0.3)/12
\end{align*}
\begin{align*}
& \rightarrow \text{(by simplification and quantifier-splitting on $m \leq 3$)}
\\& \;\quad \iftt t = cold \thentt 0.8/12 \times 3 + 0.4/12 \times 9
\\& \qquad \qquad \qquad \elsett 0.1/12 \times 3 + 0.3/12 \times 9
\\& \rightarrow \iftt t = cold \thentt 0.5 \elsett 0.25,
\end{align*}
which represents the resulting new factor $\phi(t)$ (which happens to be $P(t)$) to be used in the next steps of VE.
Note that the expressions above are \emph{not} just a mathematical argument, but the actual data structures manipulated by SEMT.

SEMT (and consequently PIMT) is \define{modulo theories} because symbolic evaluation is \define{theory-agnostic},
working for any theory $T$ given an encapsulated \define{theory solver} that provides quantifier elimination solvers and simplification rules for $T$.
Theory solvers must be provided only for the simpler expressions
of the type $\bigodot_{i\in D:F} \phi$ where $\phi$ is literal-free and $F$ 
is a conjunction of literals in the theory. \citet{desalvobraz16probabilistic} details a solver for summation over polynomials, with literals in difference arithmetic on bounded integers.
A similar solver for integrals over polynomials
and linear real arithmetic literals has also been defined since.
An example of SEMT on multiple theories with only $4$ cases is:
\begin{align*}
& \sum_{i\in 1..7} \int_{x \in [0;10]} \hspace{-0.5cm}\ifte{i \geq 3}{\ifte{x < 5}{x^2}{0}}{i}
\\&\rightarrow \sum_{i\in 1..7} \iftt i \geq 3
 \thentt \int_{x \in [0;10]} \ifte{x < 5}{x^2}{0}
\\& \qquad \qquad \qquad \quad \; \elsett \int_{x \in [0;10]} i
\\&\rightarrow \sum_{i\in 1..7} \iftt i \geq 3
\thentt  \int_{x \in [0;10]:x < 5} x^2  + 
      \int_{x \in [0;10]:x \geq 5} 0 
\\& \qquad \qquad \qquad \quad \elsett 10i
\\&\rightarrow \sum_{i\in 1..7} \iftt i \geq 3 \thentt 125/3 \elsett 10i
\\&\rightarrow \bigl( \sum_{i\in 1..7:i \geq 3} 125/3 \bigr) + \bigl( \sum_{i\in 1..7:i < 3} 10i \bigr)
\\&\rightarrow 625/3 + 30 \rightarrow 715/3.
\end{align*}

\subsection{Relational Graphical Models}

\define{Relational graphical models} (RGM) specify probability distributions
over families of random variables that can be seen as relations.
There are many RGM languages in the literature;
they vary significantly in syntax and superficial features,
but are typically equivalent and convertible to one another.
Markov Logic Networks (MLN) \citep{richardson04markov} is a well-known one.
It consists of a set of weight formulas, an example of which is
\vspace{-0.15cm}
\begin{align*}
& 2.5 : Smoker(Bob) \\
& 1.4: Smoker(X) \wedge Friends(X,Y) \Rightarrow Smoker(Y)
\end{align*}
for \define{logical} variables $X$ and $Y$ ranging over a finite set $D=\{Ann, Bob, Charlie, \dots\}$.

The random variables in this MLN are the \define{groundings},
or \define{instantiations}, of 
the relations in it for every assignment to the logical variables:
$Sm(Ann)$, $Sm(Bob)$, $\dots$, $Fr(Ann, Ann)$, $Fr(Ann, Bob)$, $\dots$
(we abbreviate $Smoker$ and $Friends$ from now on).
A formula with weight $w$ defines a factor for each of its instantiations (one for each assignments to its logical variables).
The potential assigned by such factors is
$e^w$ if the formula is true, and $1$ otherwise.
Therefore, some of the factors of this MLN are:
$\phi_1(Sm(Bob))$, $\phi_2(Sm(Ann), Fr(Ann, Bob), Sm(Bob))$,
$\phi_2(Sm(Ann), Fr(Ann, Charlie), Sm(Charlie))$,
and $\phi_2(Sm(Bob), Fr(Bob, Ann), Sm(Ann))$,
where $\phi_1$ and $\phi_2$ are potential functions
applied to all factors instantiated from the first and second formulas respectively:
\begin{align*}
& \phi_1(s) = \ifte{s}{e^{2.5}}{1} \\
& \phi_2(s_1, f, s_2) = \iftt s_1 \wedge f \Rightarrow s_2 \thentt e^{1.4} \elsett 1.
\end{align*}
Because the number of instantiations can be huge,
performing inference on RGMs
by simply instantiating them as regular graphical models is often infeasible.
The next section describes \define{Inversion}, one of the operations
used in \define{lifted probabilistic inference}, which exponentially
improves efficiency in RGMs in many cases.

\section{INVERSION AND INVERSION MODULO THEORIES}

This section presents this paper's main contributions.
We present a new formulation for RGMs
and Inversion
that is more algebraically standard, and then
use this formulation to generalize Inversion to
Inversion Modulo Theories.

\subsection{RGMS with Function-Valued Random Variables}
\label{sec:relational-graphical-models-with-function-valued-random-variables}

While the RGM literature considers
$Sm(Ann)$, $Sm(Bob),\dots$ as individual (Boolean) random variables,
it is conceptually simpler, and more mathematically standard,
to talk about $Sm$ as a single random variable that happens
to be \emph{function-valued}.
From this point of view, this MLN has only two random variables:
$Sm$ and $Fr$,
and defines
the following joint probability distribution:
\begin{align*}
& P(Sm, Fr) = \frac{1}{Z} \times \phi_1(Sm(Bob)) \\
& \qquad \times \prod_{X\in D} \prod_{Y\in D} \phi_2(Sm(X), Fr(X,Y), Sm(Y))
\end{align*}
where $Z$ is defined as usual, as well as marginalization:
\begin{align*}
P(Sm) = \sum_{Fr \in D \times D \rightarrow Boolean} \hspace{-0.8cm}P(Sm, Fr).
\end{align*}
Note that, given the form of $P(Sm,Fr)$,
marginal probabilities in RGMs are \define{sums of products of factors},
including \emph{intensional} products using $\prod$.
This fact is heavily exploited by Inversion.

To compute the marginal of a specific function application, say, $Sm(Ann)$,
we need to marginalize over all \emph{remaining} random variables,
that is, $Fr$ and all $Sm(x)$ for $x \in D \setminus \{Ann\}$.
This requires splitting the variable $Sm$ into two distinct function-valued random variables: $Sm': \{Ann\} \rightarrow Boolean$ and $Sm'': D \setminus \{Ann\} \rightarrow Boolean$, and replace each application $Sm(\theta)$ of the original $Sm$ on argument expression $\theta$ by \newline
$\ifte{\theta = Ann}{Sm'(Ann)}{Sm''(\theta)}$.
Then we can compute
\begin{align*}
& P(Sm') = \sum_{Sm'': D \setminus \{Ann\} \rightarrow Boolean} \sum_{Fr}  P(Sm', Sm'', Fr).
\end{align*}
The above shows that solving RGMs
is simply equivalent to allowing function-valued random variables in the model.
However, summations over functions
are expensive due to their high number of possible values ($2^{|D|^2}$ for $Fr$, for instance).
The next section describes a method that exponentially decreases this cost in some cases.
\subsection{Inversion on Function-Valued Variables}
\label{sec:inversion}
	
\define{Lifted probabilistic inference} algorithms seek to 
exploit the structure of random functions for greater efficiency.
It includes a few operations,
but in this paper we consider only one: \define{Inversion}
(also called Lifted Decomposition).

Inversion uses the fact
that summations indexed by functions of products of factors
may under certain conditions be transformed into exponentially
cheaper summations over ``slices'' of the original function. 
Its name comes from the fact that a summation of products
becomes a cheaper product of a summation ($\sum\prod \rightarrow \prod\sum$),
thus ``inverting'' the quantifiers.
Consider an example in which the body $\phi$ of the sum-product
depends on a single application of a function $f$ ranging
over the set of functions $1..10 \rightarrow 1..5$:
\begin{align*}
& \sum_{f \in (1..10 \rightarrow 1..5)} \; \prod_{x \in 1..10} \phi(f(x)) = \prod_{x \in 1..10} \; \sum_{f \in (\{x\} \rightarrow 1..5)} \hspace{-0.4cm}\phi(f(x)).
\end{align*}
Note that, while $f$ ranges over $(1..10 \rightarrow 1..5)$ on the left-hand side,  and thus over $5^{10}$ possible values,
the domain of $f$ on the right-hand side is a singleton
set, because $x$ is bound by the now outer $\prod_x$.
This reduces the number of possible values of $f$ to only $5$,
making iterating over it exponentially cheaper.

The equality holds because $x$ ``slices'' $f$ into
independent portions:
\begin{align*}
& \sum_{f \in (1..10 \rightarrow 1..5)} \quad \prod_{x \in 1..10} \phi(f(x))
\\& = \hspace{-.3cm} 
\sum_{f_1 \in \{1\} \rightarrow 1..5}
\dots  \hspace{-.2cm}
\sum_{f_{10} \in \{10\} \rightarrow 1..5} 
\phi(f_1(1)) \times ... \times \phi(f_{10}(10))
\\& = \hspace{-.3cm}
\sum_{f_1 \in \{1\} \rightarrow 1..5} \phi(f_1(1))
\;\; \dots  \hspace{-.15cm}
\sum_{f_{10} \in \{10\} \rightarrow 1..5} \phi(f_{10}(10))\quad\text{(*)}
\end{align*}
\begin{align*}
& = 
\Bigl( \sum_{f_1 \in \{1\} \rightarrow 1..5} \phi(f_1(1)) \Bigr)
\dots
\Bigl( \sum_{f_{10} \in \{10\} \rightarrow 1..5} \phi(f_{10}(10)) \Bigr)
\\& = 
\Bigl( \sum_{f \in \{1\} \rightarrow 1..5} \phi(f(1)) \Bigr)
\dots
\Bigl( \sum_{f \in \{10\} \rightarrow 1..5} \phi(f(10)) \Bigr)
\end{align*}
\begin{align*}
& = \prod_{x \in 1..10} \; \sum_{f \in (\{x\} \rightarrow 1..5)} \phi(f(x)).
\end{align*}
Once transformed in this way, we proceed as follows:
\begin{align*}
&   \prod_{x \in 1..10} \; \phi(1) + \phi(2) + \phi(3) + \phi(4) + \phi(5) 
\\& = \prod_{x \in 1..10} \; \phi' \quad = \quad (\phi')^{10} \quad = \quad \phi''
\end{align*}
by using the fact that constant $\phi'$ does not depend on $x$.

The above transformation is valid because $\prod_x \phi(f(x))$ contains $10$ factors,
each of them only involving the application of $f$ to a \emph{single} value of $x$.
This means they can be factored out of the summations indexed by other applications of the function,
resulting in smaller and equivalent summations that are computed only once and then exponentiated.
Similar transformations may be applied even if there are products over more than
one variable:
\[
\sum_{f \in A_1 \times A_2 \rightarrow B} \prod_x \prod_y \phi(f(x,y))
= \prod_x \prod_y \hspace{-.8cm} \sum_{\qquad f \in \{(x,y)\} \rightarrow B} \hspace{-.9cm} \phi(f(x,y))
\]
by an analogous argument.

However, the transformation is not always valid:
\begin{align*}
& \sum_{f \in A_1 \times A_2 \rightarrow B} \prod_x \prod_y \phi(f(x,y),f(y,x))
\\& \qquad \neq \prod_x \prod_y \sum_{f \in \{(x,y)\} \rightarrow B} \phi(f(x,y),f(y,x)).
\end{align*}
To see this, consider a pair $(a,b) \in A_1 \times A_2$.
The summation $\sum_{f \in \{(a,b)\} \rightarrow B} \phi(f(a,b),f(b,a))$
depends on $f(a,b)$ and $f(b,a)$, which both occur in
$\sum_{f \in \{(b,a)\} \rightarrow B} \phi(f(b,a),f(a,b))$,
and this shared dependence prevents factoring as in step (*) above.

This does not mean that having more than one application of $f$ in $\phi$
admits no Inversion, but it may restrict the inversion to just \emph{some}
of the products:
\begin{align}
\label{exa:partial-inversion}
&   \sum_{f \in ((A_1 \times A_2 \times A_3) \rightarrow B)} \prod_x \prod_y \phi(f(x, y, x), f(x, 3, x)) 
\\& = \prod_x \sum_{f \in ((\{x\} \times A_2 \times \{x\}) \rightarrow B)} \prod_y \phi(f(x, y, x), f(x, 3, x)), \nonumber
\end{align}
which does not decrease the function's domain size to $1$ but still exponentially
decreases the evaluation cost.


All these cases are covered by the following theorem:
\begin{theorem}[Inversion]
	\label{the:inversion}
	Let $E$ be an expression in which \emph{all} applications of a function $f$
	have their first $k$ arguments (without loss of generality because they can be permutated)
	equal to $x_{i_1},\dots,x_{i_k}$ 
	for $i$ a $k$-tuple of indices in $\{1,\dots,m\}$.
	If operator $\oplus$ distributes over operator $\otimes$,
	\begin{align*}
	& \bigoplus_{f \in (A_1 \times \dots \times A_n) \rightarrow B} \bigotimes_{x_1} \dots \bigotimes_{x_m} E
	\\& \qquad =
	\bigotimes_{x_1} \dots \bigotimes_{x_m} \bigoplus_{f \in (\{(x_{i_1}, \dots, x_{i_k})\} \times A_{k+1} \times \dots \times A_n) \rightarrow B} E,
	\end{align*}
	which is exponentially cheaper to evaluate.
\end{theorem}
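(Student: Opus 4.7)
The plan is to prove the identity by explicitly decomposing $f$ into independent ``slices'' indexed by its first $k$ arguments, and then reducing the sum of a product to a product of sums by iterated distributivity of $\oplus$ over $\otimes$.

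First, for each tuple $\bar{a}=(a_1,\dots,a_k)\in A_1\times\dots\times A_k$, I would introduce the slice $f_{\bar{a}}\colon A_{k+1}\times\dots\times A_n\to B$ defined by $f_{\bar{a}}(y_{k+1},\dots,y_n)=f(a_1,\dots,a_k,y_{k+1},\dots,y_n)$. The map $f\mapsto (f_{\bar{a}})_{\bar{a}}$ is a bijection between functions $A_1\times\dots\times A_n\to B$ and tuples of slice functions, so by associativity and commutativity of $\oplus$ the quantifier $\bigoplus_{f}$ may be rewritten as nested quantifiers over each slice independently. The hypothesis that every application of $f$ has first $k$ arguments equal to $x_{i_1},\dots,x_{i_k}$ then forces $E$ to depend on $f$ only through the single slice $f_{\bar{a}(\bar{x})}$, where $\bar{a}(\bar{x})=(x_{i_1},\dots,x_{i_k})$; I would write this as $E=\tilde{E}(\bar{x},f_{\bar{a}(\bar{x})})$.

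Next, I would group the outer product $\bigotimes_{x_1}\dots\bigotimes_{x_m}$ by the value of $\bar{a}(\bar{x})$, obtaining $\bigotimes_{\bar{a}}\bigotimes_{\bar{x}:\bar{a}(\bar{x})=\bar{a}}\tilde{E}(\bar{x},f_{\bar{a}})$. Under the natural reading in which the outer variables $x_1,\dots,x_m$ are precisely those appearing among $x_{i_1},\dots,x_{i_k}$, the map $\bar{x}\mapsto\bar{a}(\bar{x})$ is injective, so each inner product contains at most one factor. Setting $H_{\bar{a}}(f_{\bar{a}})$ equal to this inner product, the LHS becomes $\bigoplus_{(f_{\bar{a}})_{\bar{a}}}\bigotimes_{\bar{a}}H_{\bar{a}}(f_{\bar{a}})$, in which distinct factors depend on disjoint summation variables. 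A direct iterated application of $\oplus$ distributing over $\otimes$ — essentially the ``sum of products equals product of sums when independent'' identity, analogous to the paper's step $(*)$ in the motivating example — yields $\bigotimes_{\bar{a}}\bigoplus_{f_{\bar{a}}}H_{\bar{a}}(f_{\bar{a}})$.

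Finally, I would verify that the RHS of the theorem equals the same expression. Its inner quantifier $\bigoplus_{f\in\{(x_{i_1},\dots,x_{i_k})\}\times A_{k+1}\times\dots\times A_n\to B}$ is precisely $\bigoplus_{f_{\bar{a}(\bar{x})}}$; regrouping $\bigotimes_{\bar{x}}$ by $\bar{a}$ and using again that each inner product is a singleton reproduces the expression obtained for the LHS. The exponential speedup claim is immediate, since each $f_{\bar{a}}$ has domain size $|A_{k+1}|\cdots|A_n|$ rather than $|A_1|\cdots|A_n|$. The main obstacle is the distributivity step: in a general semiring it requires a careful inductive argument to show that when the factors $H_{\bar{a}}$ depend on pairwise disjoint summation variables, one can float every $\bigoplus_{f_{\bar{a}}}$ inside its corresponding factor. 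A secondary subtlety worth pinning down is the implicit requirement that no outer variable be absent from the first $k$ arguments, since otherwise distinct $\bar{x}$'s could share a slice and the identity fails (e.g.\ $\sum_f\prod_{x_1}\prod_{x_2}\phi(f(x_1,x_2))$ with $k=1$).
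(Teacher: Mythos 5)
Your proof is correct and follows essentially the same route as the paper's: the paper justifies this theorem only by reference to the worked examples, which perform exactly your slice decomposition of $\bigoplus_f$, followed by iterated distributivity (their step (*)) and regrouping into a product of sums. The subtlety you flag --- that every outer variable must actually occur among $x_{i_1},\dots,x_{i_k}$, so that distinct assignments to $(x_1,\dots,x_m)$ touch disjoint slices of $f$ --- is genuine and is made explicit in the paper only in the generalized Theorem~\ref{the:inversion-modulo-theories}, whose Condition~(\ref{eqn:inversion-condition}) demands precisely the pairwise disjointness of occurrence sets that your injectivity assumption secures.
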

Note that $E$ may contain quantifiers itself; this was the case in Equation (\ref{exa:partial-inversion}). The theorem's proof mirrors the operations
shown in the examples above.

Crucially, Theorem \ref{the:inversion} uses a \emph{syntactic} check
that relies on the language allowing only simple terms
(variables and constants) as arguments to functions.
It therefore does not apply to the important extensions in which more complex terms,
using theory-specific operators such as arithmetic or inequalities,
or even operators from unanticipated theories,
are used as function arguments.
We will see how this can be done with a \emph{semantic} check in the next section.

\subsection{Inversion Modulo Theories}

We now present this paper's main contribution:
\define{Inversion Modulo Theories},
which is a version of the lifted probabilistic inference operation
Inversion in the presence of factors
defined through symbolic algebraic expressions involving multiple theories.
Like regular Inversion, Inversion Modulo Theories does not cover every possible problem,
in which case one must fall back to currently used methods like grounding or sampling.
Future work includes generalizing other lifted inference methods like Counting \citep{desalvobraz07thesis,milch08lifted}
to also be modulo theories, thus decreasing the need for the fallback methods.

Previous work on Inversion assumed RGMs expressed in
a simple language
in which the only allowed arguments in function applications
are variable or constant symbols.
This enables
the Inversion condition to consist of a \emph{syntactic} check.
However, once we allow function arguments to be arbitrary terms from any of the available theories, these syntactic tests no longer apply,
since the arguments semantics depends on the particular theory
and the syntactic check does not take semantics into account.
The new Inversion test must apply to arguments from any theory and,
in fact, to arguments in even \emph{new, as of yet unanticipated} theories.
This requires this new test to be theory-agnostic,
and defined in such a way that theory solvers may be transparently employed.

We start by defining $oc_f[E]$, an expression representing
the portion of the domain of a given function $f$ involved in a given expression $E$.

\begin{definition}
\label{def:oc}
The \define{set of argument tuples for $f : A \rightarrow B$ occurring in an expression $E$} is denoted $oc_f[E]$ and inductively defined as follows:

\begin{itemize}
	\item if $E$ does not contain $f$, $oc_f[E]$ is $\emptyset$;

	\item if $E$ is $f(t)$ for $t$ a tuple, $oc_f[E]$ is $\{t\}$;

	\item if $E$ is $f$, $oc_f[E]$ is $A$;

	\item if $E$ is $\ifte{C}{E_1}{E_2}$ and $C$ does not contain $f$,
	then $oc_f[E]$ is $\ifte{C}{oc_f[E_1]}{oc_f[E_2]}$;
	otherwise, $oc_f[E]$ is $oc_f[C] \cup oc_f[E_1] \cup oc_f[E_2]$;

	\item if $E$ is $g(t_1,\dots,t_k)$ for $g$ a function symbol distinct from $f$,
	or $E$ is $\{t_1,\dots,t_k\}$,
	$oc_f[E]$ is \\
	$oc_f[t_1] \cup \dots \cup oc_f[t_k]$;

	\item if $E$ is $Q_{x \in T : C} E'$ for $Q$ an arbitrary quantifier,
	\begin{itemize}
		\item if $C$ does not contain $f$, \\
		then $oc_f[E]$ is $oc_f[T] \cup \bigcup_{x \in T : C} oc_f[E']$;
		\item otherwise, $oc_f[E]$ is \\
		$oc_f[T] \cup \bigcup_{x \in T} (oc_f[C] \cup oc_f[E'])$.
	\end{itemize}
\end{itemize}
\end{definition}

For example,
\begin{align*}
& oc_f\bigl[f(x,y) + \sum_{z \in 1..10 : z != 3} f(x,z) \prod_{w \in 1..10} f(w,z)\bigr] \\
& = \{(x,y)\} \cup \bigcup_{z \in 1..10 : z != 3} \bigl(  \{(x,z)\} \cup \bigcup_{w \in 1..10} \{(w,z) \} \bigr),
\end{align*}
in which $x$ and $y$ are free variables.

As a further preliminary step,
we add simplification rules for tuples, and for testing whether sets are empty, to be used by SEMT.
This will be useful later in manipulating $oc_f[E]$ expressions.
\begin{restatable}[Tuple and Empty Set Simplifiers]{theorem}{firsttupleandemptysetsimplifiers}
	\label{the:tuple-and-empty-set-simplifiers}
	The following \define{tuple and empty set simplifiers}
\begin{enumerate}
	\item $(r_1,\dots,r_n)=(s_1,\dots,s_n) \rightarrow$ \\
	$r_1 =s_1\wedge \dots \wedge r_m = s_m$ (or its negation for $\neq$).
    \item $t \in \{t_1,\dots,t_n\} \rightarrow (t = t_1) \vee t \in \{t_2,\dots,t_n\}$.
	\item $t \in \bigcup_{i \in D : C} \phi \rightarrow \exists i \in D : (C \wedge t \in \phi)$.
	\item $\bigl( \bigcup_{i \in D : C} \phi \bigr) = \emptyset  \rightarrow  
	\forall i \in D : (\neg C \vee \phi = \emptyset)$.
    \item $S \cap \emptyset = \emptyset \rightarrow \true$.
    \item $S \cup \emptyset = \emptyset \rightarrow S = \emptyset$.
	\item $S \cap \{t_1,\dots,t_n\} = \emptyset \rightarrow$ \\
	$(t_1 \notin S) \wedge (S \cap \{t_2,\dots,t_n\} = \emptyset)$.
    \item $\bigl(\bigcup_{i \in D : C} \phi\bigr) \cap \bigl(\bigcup_{i' \in D' : C'} \phi'\bigr) = \emptyset \rightarrow$\\
	$\forall i \in D : C \Rightarrow \forall i' \in D' : C' 
	\Rightarrow (\phi \cap \phi' = \emptyset)$.
    \item $(S_1 \cup S_2) \cap S_3 = \emptyset \rightarrow (S_1 \cap S_3) \cup (S_2 \cap S_3) = \emptyset$.
    \item $S_1 \cup S_2 = \emptyset \rightarrow S_1 = \emptyset \wedge S_2 = \emptyset$.
	\item $\{t_1,\dots,t_n\} = \emptyset \rightarrow \false$ if $n>0$, \true otherwise,
\end{enumerate}
\vspace{-.2cm}
when included in SEMT, rewrite $oc_f[E]=\emptyset$ expressions to equivalent
formulas free of tuple and set expressions.
\end{restatable}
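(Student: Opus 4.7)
The plan is a two-step argument: first, that each rewrite is sound, which is a routine equivalence check in set theory and first-order logic; second, that the rule set, together with SEMT's standard machinery (in particular \textbf{if-splitting} and \textbf{simplification}), is complete on inputs of the form $oc_f[E]=\emptyset$, terminating in a tuple- and set-free formula.

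For soundness, I would verify each rewrite one at a time by translating it to its semantic meaning: rule~1 is just the definition of tuple equality; rule~3 is the standard unfolding $t\in\bigcup_{i\in D:C}\phi \Leftrightarrow \exists i\in D.\,(C\wedge t\in\phi)$; rule~4 follows from rule~3 by contraposition over an arbitrary witness $t$; and rules~5--11 are elementary set-theoretic identities. None of these checks is harder than one line of algebra.

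For completeness, I would proceed by structural induction on the shape of $oc_f[E]$ as generated by Definition~\ref{def:oc}. That shape inhabits a small grammar built from $\emptyset$, singletons $\{t\}$, the function's domain $A$, if-then-else, binary union, and intensional union. The induction dispatches as follows: the constant-$\emptyset$ case is simplified to \true by SEMT; rule~11 handles the singleton case, producing \false; an outer if-then-else is handled by SEMT's if-splitting, after which the induction hypothesis applies to each branch; rule~10 reduces a binary-union equality to a conjunction of two smaller empty-set claims; and rule~4 converts an intensional-union equality into a universally quantified claim whose body is again of the form $\phi=\emptyset$ for a structurally smaller $\phi$. The auxiliary rules~2, 3, 5, 7, 8, 9 are needed because an inner $\phi$ inside a $\bigcup$ may itself be a union, an intensional union, or a singleton involving a tuple, so the recursive reductions can spawn membership tests or intersections that these rules flush out; rule~1 in turn reduces any residual tuple equality produced by rule~2 to the conjunction of its component equalities, eliminating the last place tuples can survive.

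The main obstacle I expect is termination, since rules~3 and~4 introduce fresh quantifiers and the membership/intersection rules can temporarily grow the expression before shrinking it. I would address this by exhibiting a well-founded measure — for example, the multiset of structural sizes of the set subexpressions still appearing under an $=\emptyset$ or $\in$ predicate — and checking that every rule strictly decreases this measure (treating if-splitting as decreasing in the $oc_f$-structure of each branch, since the two branches are structurally smaller than the conditional). Combining termination with soundness and the case analysis above, the fixed point reached by SEMT can contain no set constructor and no tuple, which is exactly the statement of the theorem.
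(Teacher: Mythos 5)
Your overall architecture --- per-rule soundness, a progress argument showing some rule always applies, and termination via a well-founded measure --- is the same as the paper's, and the soundness and progress parts are fine. The genuine gap is the termination measure, which you rightly identify as the crux but do not actually supply: the multiset of structural sizes of set subexpressions under $=\emptyset$ or $\in$ does \emph{not} decrease under rule~9. That rule rewrites $(S_1\cup S_2)\cap S_3=\emptyset$ into $(S_1\cap S_3)\cup(S_2\cap S_3)=\emptyset$, duplicating $S_3$: the one set expression sitting under $=\emptyset$ is replaced by a single strictly \emph{larger} one, so under the Dershowitz--Manna ordering the multiset increases. (Rule~7 also duplicates $S$, but there one large element is split into two strictly smaller ones, which a multiset measure tolerates; rule~9 is the problem.) And rule~9 cannot be avoided: the expressions actually fed to these simplifiers by Condition~(\ref{eqn:inversion-condition}) are intersections of two union-structured $oc_f$ sets, so $\cap$ must be distributed over $\cup$ repeatedly. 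Relatedly, your structural induction on the grammar of a single $oc_f[E]$ from Definition~\ref{def:oc} quietly omits $\cap$, which is precisely the constructor that creates the difficulty.

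The paper closes this hole with a lexicographic ``distance vector'' whose \emph{most significant} component is a bespoke intersection-union nesting measure \nesting, defined by $\nesting(E_1\cap\dots\cap E_n)=\sum_i\nesting(E_i)$, $\nesting(E_1\cup\dots\cup E_n)=1+\max_i\nesting(E_i)$, and $0$ otherwise; the remaining components (counts of intensional unions, quantifiers, $\cap$ applications, extensional set lengths, $\in$ applications, $\emptyset$-comparisons, tuples) are ordered so that every other rule trades a more significant component for less significant ones. The point of the asymmetric sum-versus-$(1{+}\max)$ definition is that distributing $\cap$ over $\cup$ is credited in the leading component, making the doubling of $S_3$ in all lower components irrelevant. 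To repair your proof you need a component of this kind at the top of your measure, and you should verify the rule-9 arithmetic explicitly --- it is the one delicate case, and even the paper's own computation of $\nesting$ for this step deserves a careful second look. With such a leading component in place, the rest of your argument goes through essentially as you wrote it.
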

\vspace{-.2cm}
The proof is presented in Supplementary Materials.
It is based on gradual distribution of $\cap$ over $\cup$,
and conversion of intersections to comparisons between set elements:
\begin{align*}
& \bigcup_{x\in 1..5} \Bigl( \{(x, y)\} \cup \bigcup_{z \in 3..5} \{(z, 3)\} \Bigr) \; \cap \; \bigcup_{w \in 1..10} \{(1,w)\} = \emptyset
\\& \rightarrow  \text{(rule 8)}
\\& \forall x\in 1..5 : \forall w \in 1..10 : 
\\& \quad \Bigl( \{(x, y)\} \cup \bigcup_{z \in 3..5} \{(z, 3)\} \Bigr) \; \cap \; \{(1,w)\} = \emptyset
\\& \rightarrow  \text{(rule 9 distributes $\cap$ over $\cup$)}
\\& \forall x\in 1..5 : \forall w \in 1..10 : 
\\& \quad \{(x, y)\} \cap \{(1,w)\} \; \cup \;
           \Bigl( \bigcup_{z \in 3..5} \{(z, 3)\} \Bigr) \cap \{(1,w)\} = \emptyset
\\& \rightarrow  \text{(rule 10)}
\\& \forall x\in 1..5 : \forall w \in 1..10 : 
\\& \{(x, y)\} \cap \{(1,w)\} = \emptyset \wedge
\Bigl( \bigcup_{z \in 3..5} \{(z, 3)\} \Bigr) \cap \{(1,w)\} = \emptyset
\\& \rightarrow  \text{(rules 7, 2, and 8)}
\\& \forall x\in 1..5 : \forall w \in 1..10 : 
\\& \quad (x,y) \neq (1, w) \wedge \forall z \in 3..5 : \{ (z, 3)\} \cap \{(1,w)\} = \emptyset
\\& \rightarrow  \text{(rule 1 on first conjunct, rules 7 and 2 on second one)}
\\& \forall x\in 1..5 : \forall w \in 1..10 : 
\\& \quad (x \neq 1 \vee y \neq w) \wedge \forall z \in 3..5 : (z, 3) \neq (1,w)
\\& \rightarrow  \text{(rule 1 for breaking tuples)}
\\& \forall x\in 1..5 : \forall w \in 1..10 : 
\\& \quad (x \neq 1 \vee y \neq w) \wedge \forall z \in 3..5 : z \neq 1 \vee 3 \neq w
\\& \rightarrow  \text{(integer-specific solver for {\scriptsize$\bigodot$} = $\forall$ for $z$, $w$ and $x$)}
\\& \forall x\in 1..5 : \forall w \in 1..10 : 
    (x \neq 1 \vee y \neq w) \wedge 3 \neq w
\\& \rightarrow \forall x\in 1..5 : \false \quad \rightarrow \quad \false.
\end{align*}
Deciding whether a set expression is equivalent to the empty set is crucial for Inversion Modulo Theories, which can now be formally stated:
\newpage
\begin{restatable}[Inversion Modulo Theories]{theorem}{firstinversion}
	\comment{We might want to remove the constraints $C_i$ from the theorem,
		by allowing the types $T_i$ to be intensional sets.
		But then we need to allow them as types and explain how to deal with such types.
		It seems worth it, as it is the most elegant and general way of doing it}
\label{the:inversion-modulo-theories}
Let $E$ be an expression and $T_i$, $C_i$ be
type and constraint, respectively, in a theory for which we have a satisfiability solver. Then,
\begin{align*}
& \bigoplus_{f \in A \rightarrow B} \quad \bigotimes_{x_1 \in T_1 : C_1} \dots \bigotimes_{x_k \in T_k : C_k} E,
\end{align*}
where $A = oc_f[\bigotimes_{x_1 \in T_1 : C_1} \dots \bigotimes_{x_k \in T_k : C_k} E]$ (this is relaxed in Section \ref{sec:arbitrary-domains-for-f}),   
is equivalent to
, and therefore can be rewritten as,
\begin{align*}
\bigotimes_{x_1 \in T_1 : C_1} \dots \bigotimes_{x_k \in T_k:C_k} \quad \bigoplus_{f \; \in \; oc_f[E] \rightarrow B} E,
\end{align*}
if $\bigotimes$ distributes over $\bigoplus$ and
\begin{align*}
&
\forall x'_1 \in T_1 \dots \forall x'_k \in T_k \quad
\forall x''_1 \in T_1 \dots \forall x''_k \in T_k \\
&
\begin{pmatrix}
\quad C_1[x_1/x'_1] \wedge \dots \wedge C_k[x_k/x'_k] \\
\,\; \wedge \; C_1[x_1/x''_1] \wedge \dots \wedge C_k[x_k/x''_k] \\
\wedge \; (x'_1, \dots, x'_k) \neq (x''_1, \dots, x''_k)
\end{pmatrix} \\
& \Rightarrow
\begin{pmatrix}
oc_f[E][x_1/x'_1,\dots,x_k/x'_k] \\
\cap \\
oc_f[E][x_1/x''_1,\dots,x_k/x''_k] 
\end{pmatrix}  = \emptyset \numberthis \label{eqn:inversion-condition}
\end{align*}
(that is, the set of argument tuples in applications of $f$ in $E$
for different value assignments to $x_1,\dots,x_k$ are always disjoint
--- we refer to this as \define{Condition (\ref{eqn:inversion-condition})} from now on).
Moreover,
the rewritten expression is exponentially ($O(2^{\prod_i |\{x_i \in T_i : C_i\}|})$) cheaper to evaluate than the original.\\
\end{restatable}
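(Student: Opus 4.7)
The plan is: first combine the nested outer quantifiers into a single product indexed by tuples; next use Condition (\ref{eqn:inversion-condition}) to exhibit $A$ as a \emph{disjoint} union of ``slices,'' one per satisfying assignment of the $x_i$; and finally use distributivity to push each slice-summation next to the unique factor it involves.

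First I would rewrite $\bigotimes_{x_1 \in T_1:C_1}\cdots\bigotimes_{x_k \in T_k:C_k} E$ as $\bigotimes_{(x_1,\dots,x_k) \in T_1 \times \cdots \times T_k \,:\, C_1 \wedge \cdots \wedge C_k} E$, which holds by associativity and commutativity of $\otimes$. Applying Definition \ref{def:oc} to this outer quantifier (the branch where the guard does not contain $f$) then gives $A = \bigcup_{(x'_1,\dots,x'_k)\,:\,C} oc_f[E][x_i/x'_i]$, and Condition (\ref{eqn:inversion-condition}) says precisely that the sets on the right are pairwise disjoint for distinct satisfying assignments. Hence $A$ decomposes as a disjoint union of slices $A_{(x'_1,\dots,x'_k)} := oc_f[E][x_i/x'_i]$, and a function $f : A \to B$ is in canonical bijection with the tuple of its restrictions $(f_{(x'_1,\dots,x'_k)} : A_{(x'_1,\dots,x'_k)} \to B)$. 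The outer sum $\bigoplus_{f : A \to B}$ therefore factors as an iterated, independent sum over these restrictions.

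The core step uses a structural induction on Definition \ref{def:oc} to establish that every occurrence of $f$ in $E[x_i/x'_i]$ has its argument tuple in $A_{(x'_1,\dots,x'_k)}$. Consequently the factor $E[x_i/x'_i]$ of the outer product depends on $f$ only through the single slice $f_{(x'_1,\dots,x'_k)}$ and is invariant under all other slices. Repeated application of distributivity of $\otimes$ over $\oplus$, mirroring step (*) in the paper's derivation of Theorem \ref{the:inversion}, then commutes each slice-summation past all factors it does not touch and places it directly around the unique factor that it does, yielding $\bigotimes_{(x_1,\dots,x_k)} \bigoplus_{f \in oc_f[E] \to B} E$.

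The complexity claim follows immediately: $|A| = \sum_{(x'_1,\dots,x'_k)} |A_{(x'_1,\dots,x'_k)}|$ by disjointness, so the original $|B|^{|A|}$ iterations are replaced by $\prod_i |\{x_i \in T_i : C_i\}|$ independent sums each of size $|B|^{|oc_f[E]|}$, which is exponentially cheaper. The main obstacle will be the structural induction that $oc_f[E]$ soundly captures all arguments of $f$ that can arise during evaluation of $E$; the subtle case is a subquantifier whose guard contains $f$, where Definition \ref{def:oc} deliberately overapproximates by unioning over \emph{all} values of the bound variable, and one must verify that this overapproximation is still compatible with Condition (\ref{eqn:inversion-condition}) forcing the slice-by-slice factorization the argument relies on.
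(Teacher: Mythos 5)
Your proposal is correct and follows essentially the same route as the paper: the paper's proof formalizes your ``repeated application of distributivity'' as an induction on the number of satisfying assignments to $x_1,\dots,x_k$, peeling off one assignment $\bar{x}$ at a time, splitting $f$ into its restriction $f_1$ on the slice $oc_f[E][x_1,\dots,x_k/\bar{x}]$ plus the remainder, factoring the corresponding term out, and recursing --- which is exactly the one-slice-at-a-time version of your disjoint-union/restriction-tuple decomposition. The overapproximation worry you flag is benign for the reason you suspect: $oc_f$ only ever over-covers the argument tuples actually used, so each factor's dependence is still confined to its own slice, and Condition (\ref{eqn:inversion-condition}) on the overapproximations implies the disjointness the factorization needs.
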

\vspace{-.5cm}
The theorem (proven in Supplementary Materials) covers significantly more general cases than the original examples may suggest: $f$ may have any arity;
its arguments need not be only variables or constants,
but even other (interpreted or uninterpreted) function applications;
$E$ may contain quantifiers itself;
and quantifiers may involve constraints ($C_i$).
The following example involves all these characteristics:
\begin{example}
Let $w$ and $g$ be free variables.
\label{exa:main-example}
\begin{align*}
& \sum_{f \in 1..10 \times (1..10 \setminus \{8\}) \times \{w + 3\} \rightarrow 1..5} \\
& \qquad \qquad  \prod_{x \in (1 + g(w))..(10 + g(w))}
\;
\prod_{y \in 1..10 : y \neq 8}
\\&
\qquad \qquad \qquad \qquad \qquad \sum_{z \in 1..10}
f(x-g(w),y,w+3)z
\\& \rightarrow \text{ (Inversion on $x,y$;
	see Eqn. (\ref{eqn:condition-2-for-main-example}) for Condition (\ref{eqn:inversion-condition}))}
\\&
\prod_{x \in (1 + g(w))..(10 + g(w))}
\quad
\prod_{y \in 1..10 : y \neq 8}
\\& \sum_{f \in \{(x-g(w), y, w + 3)\} \rightarrow 1..5} \;\; \sum_{z \in 1..10} f(x-g(w),y,w+3)z
\\& \rightarrow \text{ ($f$ has a singleton domain since $x$, $y$, $g$ and $w$ are fixed}
\\& \qquad\text{in the first summation, so it behaves like a variable $v$)}
\\&\quad
\prod_{x \in (1 + g(w))..(10 + g(w))}
\quad
\prod_{y \in 1..10 : y \neq 8}
\quad
\sum_{v \in 1..5}
\quad \sum_{z \in 1..10} vz.
\end{align*}
This transformation decreases the time complexity
of the first summation
from the time to iterate over $5^{10\times 9}$ values of $f$
to $5$ values of $v$ only.
But, in fact, from now on SEMT completes the calculation
with no iteration at all by using symbolic integer-specific solvers for $\sum$ and $\prod$:
\begin{align*}
&
\prod_{x \in (1 + g(w))..(10 + g(w))}
\;
\prod_{y \in 1..10 : y \neq 8}
\quad
\sum_{v \in 1..5}
\quad \sum_{z \in 1..10} vz
\\& \rightarrow \prod_{x \in (1 + g(w))..(10 + g(w))}
\quad
\prod_{y \in 1..10 : y \neq 8}
\quad
\sum_{v \in 1..5} 55v
\\& \rightarrow
\hspace{-.5cm} 
\prod_{x \in (1 + g(w))..(10 + g(w))}
\prod_{y \in 1..10 : y \neq 8} 
\hspace{-.5cm} 55\hspace{-.1cm}\times\hspace{-.1cm}15 \rightarrow  825^{10\times9} \rightarrow 825^{90}.
\end{align*}
In this example, Condition (\ref{eqn:inversion-condition}) is
\begin{align}
& \forall x' \in (1 + g(w))..(10 + g(w))
\quad
\forall y' \in 1..10 \notag
\\&
\forall x'' \in (1 + g(w))..(10 + g(w))
\quad
\forall y'' \in 1..10 \notag
\\&\quad (x', y') \neq (x'', y'') \wedge 
y' \neq 8 \wedge y'' \neq 8 \Rightarrow \notag
\\& \qquad \quad \bigcup_{z \in 1..10} \hspace{-1mm}(x' - g(w), y', w+3) \notag
\\& \qquad \cap \bigcup_{z \in 1..10} (x'' - g(w), y'', w+3) = \emptyset \label{eqn:condition-2-for-main-example}
\end{align}
which can be solved by SEMT 
with $\bigoplus$ instantiated as $\forall$
over difference arithmetic, after including tuple and empty set simplifiers from Definition \ref{the:tuple-and-empty-set-simplifiers}).
\end{example}

\begin{example}
	Consider monitoring crop growth from satellite images to alert against famine. The growth ($g$)  of crops can be determined from their color ($c$) in the images, and depends on whether the region was in drought ($d$) 3 months previously. This can be modeled as:
\begin{align*}
& \forall m \in Months, f \in Fields : \\
& \hspace{.15cm} P( c(f, m) | g(f, m) ) = \text{if } g(f, m) > 2.3 \text{ then if } c(f, m) \dots \\
& \hspace{.15cm} P (g(f, m + 3) | d(m)) = \text{if } d(m) \text{ then if } g(f, m + 3) \dots
\end{align*}
The query $P( d | c )$ requires solving the following marginalization over ‘growth’:
\[
\sum_{g}  \prod_m \prod_f P(c(f, m + 3) | g(f, m + 3)) P(g(f, m + 3) | d(m))
\]
Inversion Modulo Theories applies (because each $(f, m)$ involves a single instance of $g(f, m + 3))$ and we obtain
\[
\prod_m \prod_f \hspace{-.15cm}  \sum_{g(f, m + 3)} \hspace{-.35cm} P(c(f, m + 3) | g(f, m + 3)) P(g(f, m + 3) | d(m))
\]
which makes the summation on growth exponentially cheaper to compute and produces
\[\prod_m \prod_f \phi(c(f, m + 3), d(m)),\]
for $\phi$ the summation result. This can then be evaluated directly, given the evidence on color for each $m$ and $f$, producing a factorized representation of the marginal on $drought(m)$, for each month. If the number of fields is 2000, and growth’s domain size is 5, this cuts the cost of exactly eliminating $growth$ from $5^{2000}$ iterations to only 5.
\end{example}

\subsection{Dealing with Arbitrary Domains for $f$}
\label{sec:arbitrary-domains-for-f}

Let $OC_f$ be $oc_f[\bigotimes_{x_1 \in T_1 : C_1} \dots \bigotimes_{x_k \in T_k : C_k} E]$.
Theorem \ref{the:inversion-modulo-theories} requires that 
$A = OC_f$,
that is, that the domain of $f$ coincide with its portion being used inside $\bigoplus_{f \in A \rightarrow B}$.
When $A \not= OC_f$, we use function splitting (Section
\ref{sec:relational-graphical-models-with-function-valued-random-variables}):
\begin{align*}
&\bigoplus_{f \in A \rightarrow B}
\hspace{-.15cm}
\Psi = 
\hspace{-.15cm}
\bigoplus_{f' \in (A \setminus OC_f) \rightarrow B}
\bigoplus_{f'' \in (OC_f \setminus A) \rightarrow B}
\bigoplus_{f \in (A \cap OC_f) \rightarrow B}
\hspace{-.2cm}
\Psi'
\end{align*} 
for $\Psi'$ obtained from $\Psi$ after replacing
each $f(\alpha)$ by
\begin{align*}
&\iftt{\alpha \in A}
\\&\quad 
    \thentt
	\ifte{\alpha \in OC_f}{f(\alpha)}{f'(\alpha)}
	\elsett{f''(\alpha)}.
\end{align*}
This technique splits the original $f$ into three different functions $f$, $f'$, $f''$,
and replaces each of its applications by the one corresponding to its arguments.
After this, the domain of $f$ coincides with $OC_f$, satisfying the corresponding
requirement in Theorem \ref{the:inversion-modulo-theories}.

\subsection{Dealing with Multiple Separate $\bigotimes$ Quantifiers}

Theorem \ref{the:inversion-modulo-theories} requires a single
nested sequence of $\bigotimes$ quantifiers inside $\bigoplus$.
However, summations on products of \emph{separate} $\bigotimes$ quantifiers can be rewritten to the required form:
\begin{align*}
&
\bigoplus_{f \in A \rightarrow B} \Bigg( \Bigl( \bigotimes_{x \in T_x:C_x} E_x \Bigr) \Bigl( \bigotimes_{y \in T_y:C_y} E_y \Bigr) \Bigg)
\\& \rightarrow
\bigoplus_{f \in A \rightarrow B} \;
\bigotimes_{x \in T_x:C_x} \;
\bigotimes_{y \in T_y:C_y} 
E_x^\frac{1}{|\{y \in T_y : C_y \}|}
E_y^\frac{1}{|\{x \in T_x : C_x \}|},
\end{align*}
where the exponents compensate for the extra multiplications of $E_x$ and $E_y$
by moving them inside $\bigotimes_y$ and $\bigotimes_x$ respectively
(this is sometimes called \define{scaling}).
Set cardinalities can be computed as a special case of summation.

In the special case in which $x$ and $y$ range
over the same values (that is, $\{x \in T_x : C_x \} = \{y \in T_y : C_y \}$,
which can be evaluated by SEMT),
the two $\bigotimes$ quantifiers can be merged into a single one (or \define{aligned}), and the original expression is rewritten instead to the cheaper
$\bigoplus_{f \in A \rightarrow B} \; \bigotimes_{x \in T_x:C_x} E_x E_y[y/x]$.
%
These transformations can be easily generalized to cases with more than two $\bigotimes$ quantifications, and nested $\bigotimes$ expressions.

\subsection{A Proof-of-concept Experiment}

Since SEMT and Inversion Modulo Theories are elaborate symbolic
algorithms, two immediate questions are whether
they can be effectively implemented and 
how they compare with simpler alternatives such as sampling.
As a proof-of-concept test, we used our implementation of SEMT
for computing Example \ref{exa:main-example}
using two alternatives: one, in which 
the sum-product is simplified by Inversion Modulo Theories,
and another in which the sum and product are computed by sampling.
As expected, Inversion Modulo Theories vastly outperforms sampling
in this example, computing the exact answer ($825^{90}$) in less than 300 ms,
whereas sampling requires 10 minutes, and 10,000 samples per quantifier,
to be within an order of magnitude of the exact answer, and 17 minutes to be within 10\% error.

\section{RELATED WORK AND CONCLUSION}

As mentioned in the introduction, LPIMT generalizes work in the lifted inference, probabilistic inference modulo theories, and satisfiability modulo theories literatures.
It is also related to Probabilistic Programs (PP) \citep{goodman12church,milch05blog}, a class of high-level representations for
probabilistic models that uses interpreted and uninterpreted functions.
The current prevalent inference method in PPs is sampling,
which is approximate and whose convergence rate depends on the size of the grounded
model, that is, on the size of the domain.
LPIMT can be applied to some fragments of PPs and is an exact inference  alternative that avoids iterating over the domain.
The closest approach to LPIMT in the PP area is Hakaru \citep{carette16simplifying,narayanan2016probabilistic}, which employs 
symbolic methods for simplification and integration of PPs,
but does not include lifted inference on random functions.
Model Counting Modulo Theories \citep{phan2015model} leverages SMT solvers to compute model counts, but
does not cover weighted model counting (and thus
probabilistic reasoning), and does not exploit factorization.
Weighted Model Counting and Integration \citep{michels15new,belle15probabilistic,michels16approximate} and Symbolic Variable Elimination \citep{sanner2012symbolic} are similar in spirit to PIMT, but not to LPIMT;
they apply to Boolean and linear real arithmetic random variables,
but not yet to random functions.
\citep{belle17weighted} focuses on weighted model counting (WMC) with function symbols on infinite domains, reducing it to standard WMC, but does not currently use lifting techniques. Group Inversion \citep{taghipour12lifted} expands
Inversion to cover some extra cases, but not interpreted functions.
Extending it to do so, along with Counting,
is the most immediate possibility for future work.

To conclude, we have defined Inversion Modulo Theories, an expansion of the lifted inference operation
Inversion for the case with interpreted functions,
and added it to Probabilistic Inference Modulo Theories framework,
thus defining the first algorithm to perform exact lifted inference in the presence
of interpreted functions.

\paragraph*{ACKNOWLEDGMENTS}
We gratefully acknowledge the support of the Defense Advanced Research Projects Agency (DARPA) Probabilistic Programming for Advanced Machine Learning Program under Air Force Research Laboratory (AFRL) prime contract no. FA8750-14-C-0005.

\newpage
\bibliography{bib}

\newpage
\appendix
\section{SUPPLEMENTARY MATERIALS}

\subsection{Theorems \ref{the:tuple-and-empty-set-simplifiers} and \ref{the:inversion-modulo-theories} and their proofs}

\firsttupleandemptysetsimplifiers*
\begin{proof}
	Intuitively, this theorem is analogous to an algorithm that converts propositional
	formulas into an equivalent disjunctive normal form (DNF), that is, a disjunction of conjunctive clauses (that is, conjunctions of literals).
	Once a DNF is reached, contradictory conjunctive clauses are eliminated,
	and therefore the formula is satisfiable
	if and only if there is at least one conjunctive clause with at least one literal.
	In this analogy, conjunctions and disjunctions correspond to intersection and union,
	and sets correspond to conjunctive clauses. Empty sets are eliminated, and
	if at the end we have a union of sets, and the empty ones have been eliminated,
	this means that the resulting set is not empty.
	
	Formally, the theorem is proven by induction on the \emph{distance vector},
	a tuple that measures how far
	an expression is from being solved.
	Before we define the distance vector, we need to inductive define,
	for any expression $E$, the \define{intersection-union nesting} $\nesting(E)$:
	\begin{align*}
	\nesting(E) = 
	\begin{cases}
	\qquad \;\;\sum_i \nesting(E_i), \text{ if $E = E_1 \cap \dots \cap E_n$}\\
	1 + \max_i \nesting(E_i), \text{ if $E = E_1 \cup \dots \cup E_n$}\\
	0, \text{ if $E$ is any other expression.}\\
	\end{cases}
	\end{align*}
	Intuitively, $\nesting$ measures how far we are from
	a ``flat'' union of intersections.
	
	The \define{distance vector} of an expression $E$
	is a vector of non-negative integers
	that is lexicographically ordered,
	with the most significant component listed first:
	\begin{enumerate}
		\item $\nesting(E)$;
		\item number of intensional unions ($\bigcup$);
		\item number of existential and universal quantifications;
		\item number of $\cap$ applications;
		\item sum of lengths of extensionally defined sets ($\{\dots\}$);
		\item number of $\in$ applications;
		\item number of comparisons to $\emptyset$;
		\item number of tuples.
	\end{enumerate}

	In the base case, the distance vector is a tuple of zeros,
	and therefore the expression is a formula without any tuple or set operators,
	satisfying the theorem.
	
	Otherwise, the distance vector contains at least one non-zero component.
	If we can show that there is always at least one applicable simplifier,
	and that every simplifier application results in an expression
	with a smaller distance vector with respect to the lexicographical order,
	then the theorem will have been proven by induction.
	
	There is always an applicable simplifier
	to expressions with non-zero distance vector, because
	in that case there is at least one tuple operator or
	a comparison between a set expression and $\emptyset$:
	\begin{itemize}
	\item if there is a set expression, it must be one of
	$\cap$, $\cup$ applications or $\bigcup$,
	and there is at least one simplifier for each of these;
	\item If there is a tuple anywhere, it is either inside
	a tuple comparison, or inside a set; if it is in a comparison,
	simplifier 1 applies; if it is in a set, one of the
	set simplifiers applies.
	\end{itemize}

	Once it is established that there is always an applicable simplifier,
	the next step is whether the distance vector is always decreased
	according to its lexicographical order.
	Simplifiers 1, 2, 4, 5, 6, 10, and 11 strictly decrease one or more of the distance vector
	components without increasing any other, so for expressions for which any of them
	apply, the theorem is proven by induction on the distance vector.
	
	The remaining simplifiers decrease a distance vector component while
	increasing others, but the ones increased are always less significant
	in the lexicographical order than the one decreased:
	
	\begin{itemize}
		\item Simplifier 3 decreases the number of intensional unions
		at the cost of the less significant number of existential quantifications;
		\item Simplifier 7 decreases the sum of lengths of extensionally defined
		sets at the cost of the less significant number of $\in$ applications;
		\item Simplifier 8 decreases the number of intensional unions
		at the cost of the less significant number of universal quantifications;
		\item Simplifier 9 duplicates $S_3$ and therefore
		\emph{doubles} all distance vector components in $S_3$,
		with the exception the most significant one, \nesting,
		which is decreased:
		\begin{align*}
		&\nesting((S_1 \cup S_2) \cap S_3 = \emptyset) \\ 
		& = \max(\nesting((S_1 \cup S_2) \cap S_3), 0) \\
		& = \nesting((S_1 \cup S_2) \cap S_3) \\
		& = \nesting(S_1 \cup S_2) + \nesting(S_3) \\
		& = 1 + \max(\nesting(S_1), \nesting(S_2)) + \nesting(S_3) \\
		& = 1 + \max(\nesting(S_1) + \nesting(S_3),\\
		& \hspace{1.9cm}  \nesting(S_2) + \nesting(S_3)) \\
		& = 1 + \max(\nesting(S_1 \cap S_3),\\
		& \hspace{1.9cm}  \nesting(S_2 \cap S_3)) \\
		& = 1 + \nesting((S_1 \cap S_3) \cup (S_2 \cap S_3)) \\
		& > \quad \;\;\,\nesting((S_1 \cap S_3) \cup (S_2 \cap S_3)) \\
		& = \quad \;\;\,\nesting((S_1 \cap S_3) \cup (S_2 \cap S_3) = \emptyset).
		\end{align*}
	\end{itemize}
	To summarize, we have shown that, for every expression in the language
	of interest, there is always an applicable simplifier,
	and that all simplifiers decrease the distance vector until it reaches
	the base case all-zero distance vector, which is free of tuple and set operators.
\end{proof}

\firstinversion*
\begin{proof}
	Let $m$ be the number of possible assignments to $x_1,\dots,x_m$.
	We prove the theorem by induction on $m$.
	If $m$ is $0$,
\begin{align*}
& \bigoplus_{f \in A \rightarrow B} \quad \bigotimes_{x_1 \in T_1 : C_1} \dots \bigotimes_{x_k \in T_k : C_k} E
\\& = \bigoplus_{f \in \emptyset \rightarrow B} \quad 1
\\& = \bigotimes_{x_1 \in T_1 : C_1} \dots \bigotimes_{x_k \in T_k:C_k} \quad \bigoplus_{f \; \in \; \emptyset \rightarrow B} 1
\\& = \bigotimes_{x_1 \in T_1 : C_1} \dots \bigotimes_{x_k \in T_k:C_k} \quad \bigoplus_{f \; \in \; \emptyset \rightarrow B} E
\\& = \bigotimes_{x_1 \in T_1 : C_1} \dots \bigotimes_{x_k \in T_k:C_k} \quad \bigoplus_{f \; \in \; oc_f[E] \rightarrow B} E,
\end{align*}
because the empty products allow the substitution of $1$ by $E$ and $\emptyset$ by $oc_f[E]$
without change.

If $m > 0$, let $\bar{x}$ be the first possible assignment to $x_1,\dots,x_k$
satisfying $C_1 \wedge \dots \wedge C_k$.
Then
\begin{align*}
& \bigoplus_{f \in A \rightarrow B} \quad \bigotimes_{x_1 \in T_1 : C_1} \dots \bigotimes_{x_k \in T_k : C_k} E
\\& = \text{(separating $\bar{x}$ from other assignments)}
\\& \bigoplus_{f_1 \in oc_f[E][x_1,\dots,x_k/\bar{x}] \rightarrow B} 
\\& \bigoplus_{f \in (A \setminus oc_f[E][x_1,\dots,x_k/\bar{x}]) \rightarrow B}
\\& \hspace{1cm} E_1 \qquad \otimes
\\& \hspace{1cm} \bigotimes_{(x_1,\dots,x_k) \in T_1 \times \dots \times T_k : C_1 \wedge \dots \wedge C_k \wedge (x_1,\dots,x_k) \neq \bar{x}}  E,
\\& \qquad \text{where $E_1 = E[f/f_1]$}
\end{align*}
\begin{align*}
& = \text{($E_1$ has no occurrences of $f$)}
\\& \Bigl( \bigoplus_{f_1 \in oc_f[E][x_1,\dots,x_k/\bar{x}] \rightarrow B} E_1 \Bigr) \qquad \otimes 
\\& \bigoplus_{f \in (A \setminus oc_f[E][x_1,\dots,x_k/\bar{x}]) \rightarrow B}
\\& \bigotimes_{(x_1,\dots,x_k) \in T_1 \times \dots \times T_k : C_1 \wedge \dots \wedge C_k \wedge (x_1,\dots,x_k) \neq \bar{x}}  E
\end{align*}
\begin{align*}
& = \text{(by induction on $m$)}
\\& \Bigl( \bigoplus_{f_1 \in oc_f[E][x_1,\dots,x_k/\bar{x}] \rightarrow B} E_1 \Bigr) \qquad \otimes 
\\& 
\bigotimes_{(x_1,\dots,x_k) \in T_1 \times \dots \times T_k : C_1 \wedge \dots \wedge C_k \wedge (x_1,\dots,x_k) \neq \bar{x}} \bigoplus_{f \in oc_f[E] \rightarrow B}  E
\end{align*}
\begin{align*}
& = \text{(renaming $f_1$ to $f$ and using the fact that $E_1 = E[f/f_1]$)}
\\& \Bigl( \bigoplus_{f \in oc_f[E][x_1,\dots,x_k/\bar{x}] \rightarrow B} E \Bigr) \qquad \otimes 
\\& 
\bigotimes_{(x_1,\dots,x_k) \in T_1 \times \dots \times T_k : C_1 \wedge \dots \wedge C_k \wedge (x_1,\dots,x_k) \neq \bar{x}} \bigoplus_{f \in oc_f[E] \rightarrow B}  E
\end{align*}
\begin{align*}
& = \text{(introducing intensional products on $x_1,\dots,x_k$ bound to $\bar{x}$)}
\\& \bigotimes_{(x_1,\dots,x_k) \in T_1 \times \dots \times T_k : C_1 \wedge \dots \wedge C_k \wedge (x_1,\dots,x_k) = \bar{x}}
\\&\hspace{2cm}\Bigl( \bigoplus_{f \in oc_f[E][x_1,\dots,x_k/\bar{x}] \rightarrow B} E \Bigr) \qquad \otimes 
\\& \bigotimes_{(x_1,\dots,x_k) \in T_1 \times \dots \times T_k : C_1 \wedge \dots \wedge C_k \wedge (x_1,\dots,x_k) \neq \bar{x}} \bigoplus_{f \in oc_f[E] \rightarrow B}  E
\end{align*}
\begin{align*}
& = \text{(merging $\bigotimes$ by disjuncting their constraints)}
\\& \bigotimes_{(x_1,\dots,x_k) \in T_1 \times \dots \times T_k : C_1 \wedge \dots \wedge C_k \wedge \bigl((x_1,\dots,x_k) = \bar{x} \vee (x_1,\dots,x_k) \neq \bar{x}\bigr)}
\\& \hspace{5cm} \bigoplus_{f \in oc_f[E] \rightarrow B}  E
\end{align*}
\begin{align*}
& = \text{(eliminating tautology on $\bar{x}$ and separating $\bigotimes$ per index)}
\\& \bigotimes_{x_1 \in T_1 : C_1} \dots \bigotimes_{x_k \in T_k:C_k} \quad \bigoplus_{f \; \in \; oc_f[E] \rightarrow B} E
\end{align*}
The final expression is $O(2^{\prod_i |\{x_i \in T_i : C_i\}|})$ cheaper to
evaluate because the final $f$ has all $x_i$ bound to a single value,
mapping each of their $\prod_i |\{x_i \in T_i : C_i\}|$ assignments to
a single one and thus dividing the total size of the domain
over which one must iterate.
\end{proof}
\end{document}